\documentclass{article}

\usepackage{microtype}
\usepackage{graphicx}
\usepackage{subcaption}
\usepackage{booktabs} 

\usepackage[preprint]{icml2026}

\usepackage{amsmath}
\usepackage{amssymb}
\usepackage{mathtools}
\usepackage{amsthm}

\usepackage[capitalize,noabbrev]{cleveref}

\theoremstyle{plain}
\newtheorem{theorem}{Theorem}[section]
\newtheorem{proposition}[theorem]{Proposition}
\newtheorem{lemma}[theorem]{Lemma}

\theoremstyle{definition}
\newtheorem{definition}[theorem]{Definition}
\newtheorem{assumption}[theorem]{Assumption}
\theoremstyle{remark}

\icmltitlerunning{FATE: Closed-Loop Feasibility-Aware Task Generation with Active Repair for Physically Grounded Robotic Curricula}

\begin{document}

\twocolumn[
  \icmltitle{FATE: Closed-Loop Feasibility-Aware Task Generation \\ with Active Repair for Physically Grounded Robotic Curricula}



\icmlsetsymbol{equal}{*}

\begin{icmlauthorlist}
    \icmlauthor{Bingchuan Wei}{equal,thuAero}
    \icmlauthor{Bingqi Huang}{equal,thuAero}
    \icmlauthor{Jingheng Ma}{thuInte}
    \icmlauthor{Zeyu Zhang}{BIGAI}
    \icmlauthor{Sen Cui}{thuAuto}
\end{icmlauthorlist}

\icmlaffiliation{thuAero}{School of Aerospace Engineering, Tsinghua University, Beijing, China}
\icmlaffiliation{thuAuto}{Department of Automation, Tsinghua University, Beijing, China}
\icmlaffiliation{thuInte}{School of Integrated Circuits, Tsinghua University, Beijing, China}
\icmlaffiliation{BIGAI}{State Key Laboratory of General Artificial Intelligence, Beijing Institute for General Artificial Intelligence (BIGAI), Beijing, China}

\icmlcorrespondingauthor{Sen Cui}{cuis@mail.tsinghua.edu.cn}

\vskip 0.3in
]



\printAffiliationsAndNotice{}  

\begin{abstract}
Recent breakthroughs in generative simulation have harnessed Large Language Models (LLMs) to generate diverse robotic task curricula, yet these open-loop paradigms frequently produce linguistically coherent but physically infeasible goals—stemming from ungrounded task specifications or misaligned objective formulations. To address this critical limitation, we propose FATE (Feasibility-Aware Task gEneration)—a closed-loop, self-correcting framework that reimagines task generation as an iterative validation-and-refinement process. Unlike conventional methods that decouple generation and verification into discrete stages, FATE embeds a generalist embodied agent directly into the generation loop to proactively guarantee the physical groundedness of the resulting curriculum. FATE instantiates a sequential auditing pipeline: it first validates static scene attributes (e.g., object affordances, layout compatibility) and subsequently verifies execution feasibility via simulated embodied interaction. Critical to its performance, upon detecting an infeasible task, FATE deploys an active repair module that autonomously adapts scene configurations or policy specifications—converting unworkable proposals into physically valid task instances. Extensive experiments validate that FATE generates semantically diverse, physically grounded task curricula while achieving a substantial reduction in execution failure rates relative to state-of-the-art generative baselines.
\end{abstract}

\section{Introduction}
The paradigm of embodied artificial intelligence is shifting from static environments to open-ended, procedurally generated worlds \cite{wang2023gensim, wang2023robogen}. Open-loop LLM-based generators can produce diverse task proposals, but many are infeasible due to static inconsistencies (e.g., instruction--scene mismatch, asset affordance violations, or unsatisfied preconditions). Even when the initial setup is plausible, tasks may still fail at runtime due to robot reachability limits, contact instability, or planning/learning breakdowns. Consequently, unconstrained open-loop mechanisms produce invalid data and waste computational resources.
\begin{figure*}[ht]
  \begin{center}
    \centerline{\includegraphics[width=\textwidth]{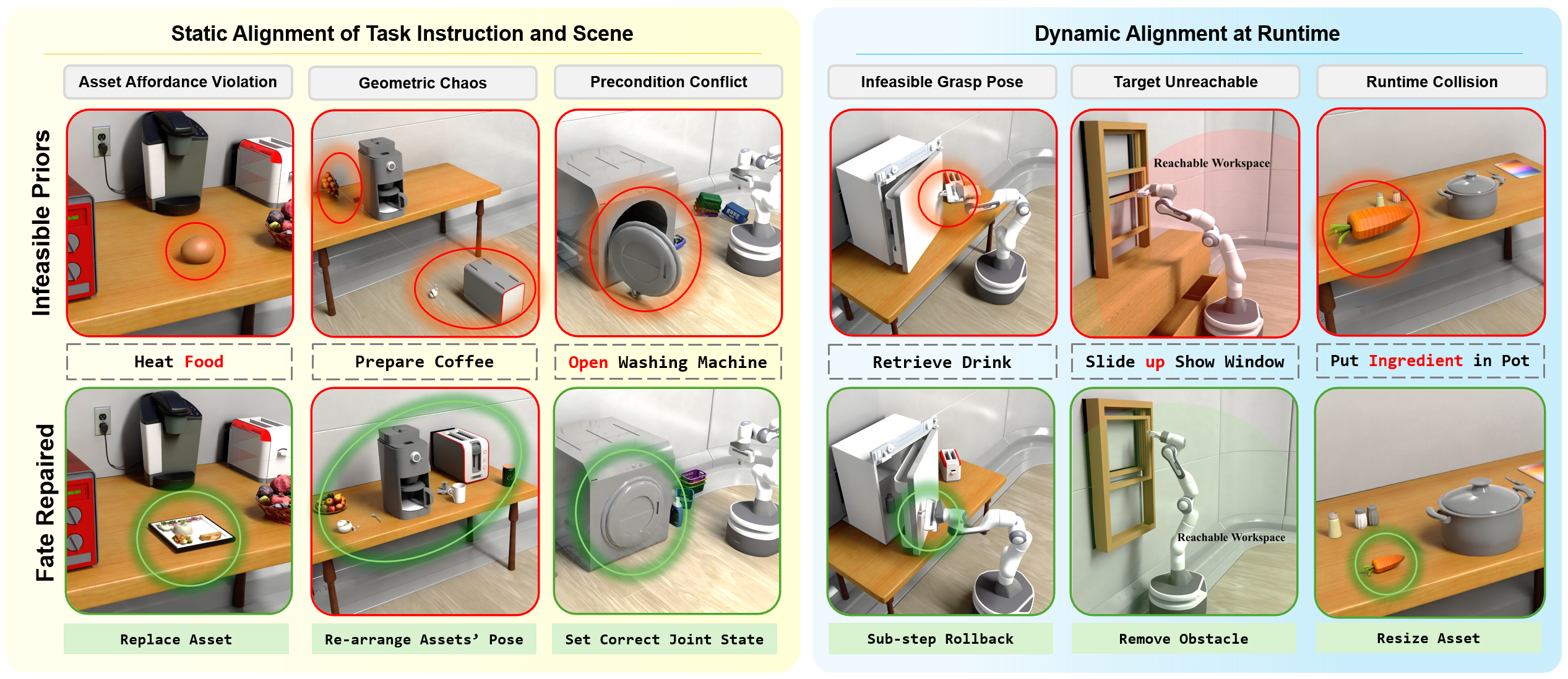}}
    \caption{\textbf{FATE aligns task feasibility across static and dynamic dimensions.} 
      \textbf{(Left) Static Alignment:} The system corrects initial scene priors by resolving semantic violations (e.g., replacing raw eggs), geometric chaos, and precondition conflicts. 
      \textbf{(Right) Dynamic Alignment:} FATE addresses runtime execution failures, such as high-torque grasps, unreachable targets, and collisions, through iterative rollback mechanisms. 
      \textbf{Red} highlights indicate initial infeasibilities; \textbf{Green} indicates FATE-repaired executable environments.}
    \label{fig:teaser}
  \end{center}
\end{figure*}

Building on the aforementioned observations, we argue that task specification rationality (i.e., the alignment of task settings with real-world commonsense) and task objective executability (i.e., the practical feasibility of goal achievement via robotic interaction) must be fully incorporated into a closed-loop task generation pipeline. This integration ensures that generated tasks not only maintain linguistic coherence, but also adhere to physical grounding—addressing the fundamental limitation of open-loop generation. 

To overcome this challenge, we propose to treat feasibility as a hierarchical constraint satisfaction problem spanning static perception and runtime execution. Specifically, visual plausibility does not imply solvability; a coherent scene may fail due to unreachable target or inconspicuous collisions. Therefore, we address two distinct failure modes shown in Figure~\ref{fig:teaser}: \textbf{static hallucinations} (geometric/semantic violations) and \textbf{dynamic divergences} (runtime kinematic or interaction failures). 

We advance \textbf{FATE} (Feasibility-Aware Task gEneration), brain-in-the-loop framework for hierarchical feasibility alignment with a decoupled strategy: first projecting proposals onto a statically valid manifold, then refining the task via in-execution feedback. Specifically, first, a \textbf{static alignment} operator rectifies perceptual violations (e.g., interpenetrations). Subsequently, a \textbf{dynamic alignment} auditor monitors execution, iteratively adjusts solver parameters to resolve runtime failures. This transforms task generation from stochastic sampling into closed-loop curriculum.

We build FATE using a generative simulator and a generalist embodied foundation model, and evaluate the framework on a new benchmark designed to stress-test feasibility-aware generation. We validate FATE along three axes: \textbf{generative diversity}, \textbf{feasibility awareness}, and \textbf{refinement efficacy}. Experiments show that FATE maintains strong linguistic and visual diversity compared to expert-designed benchmarks and open-loop generators, while substantially reducing the end-to-end feasibility gap via hierarchical audit-and-repair. We further report the auditor's diagnostic accuracy and quantify how often infeasible proposals are salvaged by repair across stages. The contributions of this work are threefold:
\begin{enumerate}
    \setlength{\itemsep}{0pt}
    \setlength{\parskip}{0pt}
    \setlength{\parsep}{0pt}
    \item We propose a new problem setting, \textbf{Feasibility-Aware Open-Ended Task Generation}, that proactively accounts for the feasibility gap induced by both unjustified task settings and infeasible task goals.
    \item We propose a brain-in-the-loop framework \textbf{FATE}, which integrates dual-phase auditing and repair and proactively diagnoses failure modes via an iterative refinement process to guarantee practical feasibility.
    \item Experiments demonstrate that \textbf{FATE} can effectively identify and correct infeasible tasks, reducing simulation waste at every stage of the process while significantly boosting the performance of downstream policy learning.
\end{enumerate}


\section{Related Work}
\label{sec:related_work}
The use of LLMs to procedurally generate simulation environments has evolved significantly. Early systems like \textbf{GenSim} \cite{wang2023gensim} and \textbf{RoboGen} \cite{wang2023robogen} demonstrated the potential of generating simulation codes from text. Recent works have pushed this frontier: \textbf{GenSim2} \cite{gensim2_2025} advances this paradigm by integrating multi-modal LLMs (e.g., GPT-4V) for better scene understanding and reasoning; \textbf{AgentGen} \cite{agentgen_2025} introduces a bidirectional evolutionary strategy to generate tasks with a smooth difficulty gradient; \textbf{ReGen} \cite{regen_2025} proposes an "inverse design" paradigm, inferring simulation environments from desired agent behaviors; and \textbf{Factorsim} \cite{factorsim_2024} employs factorized POMDP representations to generate modular simulation codes. 

Despite these advances in structural coherence and difficulty scaling, these methods largely operate in a "feed-forward" manner. They lack an active \textit{feasibility auditor} to detect when generated constraints (e.g., target positions, object masses) violate the robot's physical limits. While \textbf{DrEureka} \cite{ma2024dreureka} automates reward tuning and domain randomization for sim-to-real transfer, it assumes the task itself is valid. FATE addresses this limitation by introducing an adaptive repair module that actively modifies the task definition to ensure physical executability prior to training.

A parallel line of research focuses on amplifying scarce human demonstrations. \textbf{MimicGen} \cite{mimicgen_2023} and its recent extensions \textbf{DemoGen} \cite{demogen_2025} and \textbf{SkillMimicGen} \cite{garrett2024skillmimicgen} leverage data augmentation (e.g., rigid body transformations, skill segmentation) to expand a limited set of source demos into large datasets. \textbf{DexMimicGen} \cite{jiang2025dexmimicgen} further extends this to bimanual dexterous manipulation. 
While highly effective, these methods are \textit{demonstration-dependent}—they can only generate variations of tasks that have already been demonstrated by humans. In contrast, FATE is \textit{demonstration-free} and fully open-ended. It targets the discovery of entirely novel task semantics (e.g., "use the toaster to heat the mug"—a task a human may never have demonstrated) by synthesizing the task, environment, and solution policy from scratch, employing the MLLM auditor as a scalable proxy for human verification.

The rise of video generation has led to methods that use synthesized videos as training supervision. \textbf{DreamGen} \cite{dreamgen_2025} fine-tunes video world models to generate future frames for robot learning; \textbf{Gen2Real} \cite{gen2real_2025} and \textbf{LuciBot} \cite{lucibot_2025} extract policies directly from generated videos; and \textbf{IRASim} \cite{zhu2025irasim} achieves fine-grained interaction modeling via diffusion transformers. 
However, generative video models are prone to "physics hallucinations"—generating visually plausible but physically impossible transitions \cite{vid2world2025}. FATE treats these generative models not as ground truth, but as hypotheses. By placing an embodied brain (Auditor) in the loop, FATE validates the physical consistency of generated goals (akin to the reasoning in \textbf{SEEA-R1} \cite{seear1_2025} and \textbf{MindJourney} \cite{mindjourney_2025}) and repairs them when they violate physical laws, ensuring that the "imagined" curriculum is semantically meaningful, physically grounded and learnable.

\section{Problem Formulation}
\label{sec:problem_formulation}
In this section, we start with the necessary notation and finally present the formal definition of the overall problem.


\subsection{Feasibility-Aware Task}
\label{subsec:task_def}

We posit that a robotic task is not merely a semantic instruction but a coupled system comprising linguistic intent, physical environment, and a constrained solution space.
\begin{definition}[Feasibility-Aware Task]
\label{def:task_tuple}
A task instance is defined as a tuple $\tau = (\mathcal{I}, \mathcal{S}, \Omega_\pi)$, where $\mathcal{I}$ represents the semantic-temporal specification, $\mathcal{S}$ denotes the physical scene configuration, and $\Omega_\pi$ represents the constrained execution policy manifold.
\end{definition}
\paragraph{Instruction $\mathcal{I}$ as Semantic-Temporal Specification.}
The instruction $\mathcal{I}$ is formalized as a pair $(\mathcal{L}, \Psi_\mathcal{I})$. Here, $\mathcal{L} \in \mathcal{V}^*$ denotes a sequence of natural language tokens providing high-level intent. Crucially, $\mathcal{L}$ maps to an implicit, often non-differentiable, semantic predicate functional $\Psi_\mathcal{I}: \Xi \to \{0, 1\}$, where $\Xi$ is the space of state-action trajectories. A trajectory $\xi \in \Xi$ satisfies the instruction if and only if $\Psi_\mathcal{I}(\xi) = 1$. This encapsulates the temporal logic constraints of the task without assuming a specific decomposition.
\paragraph{Scene $\mathcal{S}$ as Semantic Articulated Configuration.}
The scene $\mathcal{S}$ is defined as a set of semantic articulated entities $\mathcal{S} = \{ (M_j, q_j, \Lambda_j) \}_{j=1}^N$. Each entity consists of its physical manifold and inertial properties $M_j$, its kinematic configuration state $q_j \in \mathcal{Q}$ (spanning the Lie group $SE(3)$ and joint space $\mathbb{R}^{d_j}$), and a semantic affordance mapping $\Lambda_j: \mathcal{M}_j \to \mathbb{L}_{sem}$. The mapping $\Lambda_j$ grounds geometric features to semantic labels (e.g., graspability, articulation types), providing the physical support necessary.
\paragraph{Manifold $\Omega_\pi$ as Constrained Policies.}
Unlike standard reinforcement learning which explores the universal parameter space $\Theta$, generative simulation imposes structural constraints on the solution search. We define $\Omega_\pi \subset \Theta$ as a compact sub-manifold induced by the generator's code specification (e.g., specific reward structures, primitive selections, or planning horizons). $\Omega_\pi$ effectively delineates the \textit{permissible search space} for the agent's learning algorithm. By including $\Omega_\pi$ in the task definition, we formulate the generated instance not merely as a problem to be solved, but as a \textit{self-contained learning unit}. This formulation posits that a valid open-ended task must intrinsically encode its own solvability boundaries. 

\subsection{Feasible Policy and Task Feasibility}
\label{subsec:feasible_policy}

Validity in embodied tasks is not binary but measure-theoretic. A task is only practically feasible if the set of solutions has non-negligible volume under the agent's exploration capabilities.

\begin{definition}[Feasible Policy Set]
\label{def:feasible_set}
For a given task $\tau$, the feasible policy set $\Phi_\tau \subset \Omega_\pi$ is the subset of policies that satisfy the semantic predicate $\Psi_\mathcal{I}$ with high probability under the scene dynamics induced by $\mathcal{S}$:
\begin{equation}
    \Phi_\tau = \left\{ \theta \in \Omega_\pi \mid \mathbb{E}_{\xi \sim (\pi_\theta, \mathcal{S})} [\Psi_\mathcal{I}(\xi)] \ge 1 - \epsilon \right\}
\end{equation}
where $\epsilon$ is a tolerance parameter for execution stochasticity and $\pi_\theta$ is a policy parameterized by $\theta$.
\end{definition}
To quantify the tractability of finding such a policy, we model the policy search process as sampling from an initialization density. Let $\rho_{init}$ be a probability measure over the parameter manifold $\Omega_\pi$. This measure represents the static exploration volume accessible to the optimization algorithm.
\begin{definition}[Task Feasibility Measure]
\label{def:feasibility_measure}
The feasibility $\mu(\tau)$ is defined as the measure of the feasible set under $\rho_{init}$:
\begin{equation}
    \mu(\tau) = \int_{\Omega_\pi} \mathbb{I}(\theta \in \Phi_\tau) \, d\rho_{init}(\theta)
\end{equation}
\end{definition}
\paragraph{Implication.} A task is considered \textit{physically grounded} if $\mu(\tau) > \delta_{min}$. Conversely, if $\mu(\tau) \approx 0$, the task exhibits a \textit{Feasibility Gap}. Practically, this manifests as execution failures where the robotic RL agent never or hardly converge despite exhaustive training, either due to physical impossibility ($\Phi_\tau = \emptyset$) or because the generated policy manifold $\Omega_\pi$ fails to intersect with the solution space. This leads to an infeasible generated task.

\subsection{Problem Statement: Feasibility-Aware Task Generation}
\label{subsec:problem_statement}

We model the generative process as sampling from a task distribution $P_{\mathcal{G}}$ generated by a task generator $\mathcal{G}$ over the task space. The core challenge is that the support of the high-diversity generator $P_{\mathcal{G}}$, which consists of unverified task proposals, typically extends far beyond the physically grounded region where $\mu(\tau) > \delta_{min}$.

\begin{definition}[Feasibility Gap]
\label{def:feasibility_gap}
The \textbf{feasibility gap} is defined as the probability mass of generated tasks whose feasibility score fails to meet a minimum threshold. Given any refinement mapping $\mathcal{A}: \tau \mapsto \tau$, we define
\begin{equation}
    L_{\mathcal{G}}(\mathcal{A}) \triangleq \mathbb{E}_{\tau \sim P_{\mathcal{G}}} \left[ \mathbb{I}(\mu(\mathcal{A}(\tau)) \le \delta_{min}) \right].
\end{equation}
In particular, the \textit{pre-alignment} feasibility gap corresponds to the identity mapping $\mathcal{A} = \mathrm{Id}$.
\end{definition}

This definition provides an operational notion of \textit{executability}: under the induced policy manifold $\Omega_\pi$ and initialization measure $\rho_{init}$, a task is executable only if its feasible policy set has non-negligible measure (i.e., $\mu(\tau) > \delta_{min}$).

\paragraph{Goal of FATE.} Consequently, \textbf{FATE} is not merely to sample from $P_{\mathcal{G}}$, but to construct a refinement mechanism that bridges this gap without collapsing the generative diversity or deviating from the user's intent. Formally, we seek a mapping $\mathcal{A}: \tau \to \tau$ that projects task hypotheses into the feasible region. The optimal operator is the solution to the following constrained minimization problem
\begin{subequations}
\label{eq:FATE_optimization_target_func}
\begin{align}
    \min_{\mathcal{A}} \quad & L_{\mathcal{G}}(\mathcal{A}) \label{eq:opt_objective} \\
    \textrm{s.t.} \quad & \mathbb{E}_{\tau \sim P_{\mathcal{G}}} [D(\tau, \mathcal{A}(\tau))] \le \epsilon_{sem} \label{eq:opt_constraint_sem},
\end{align}
\end{subequations}
where Eq.\eqref{eq:opt_objective} minimizes the feasibility gap (Definition~\ref{def:feasibility_gap}). $D:\mathcal{T}\times\mathcal{T}\to\mathbb{R}_{\ge 0}$ denotes a semantic distance metric between two task instances, with smaller values indicating higher semantic consistency. Therefore, Eq.\eqref{eq:opt_constraint_sem} enforces that the projected task $\mathcal{A}(\tau)$ remains semantically congruent with the original intent within a tolerance $\epsilon_{sem}$.

\section{Methodology}
\label{sec:methodology}
In this section, we present the constructive realization of the optimization objective defined in Eq.\eqref{eq:FATE_optimization_target_func}. 


\begin{figure}[ht]
  \begin{center}
    \includegraphics[width=0.4\textwidth]{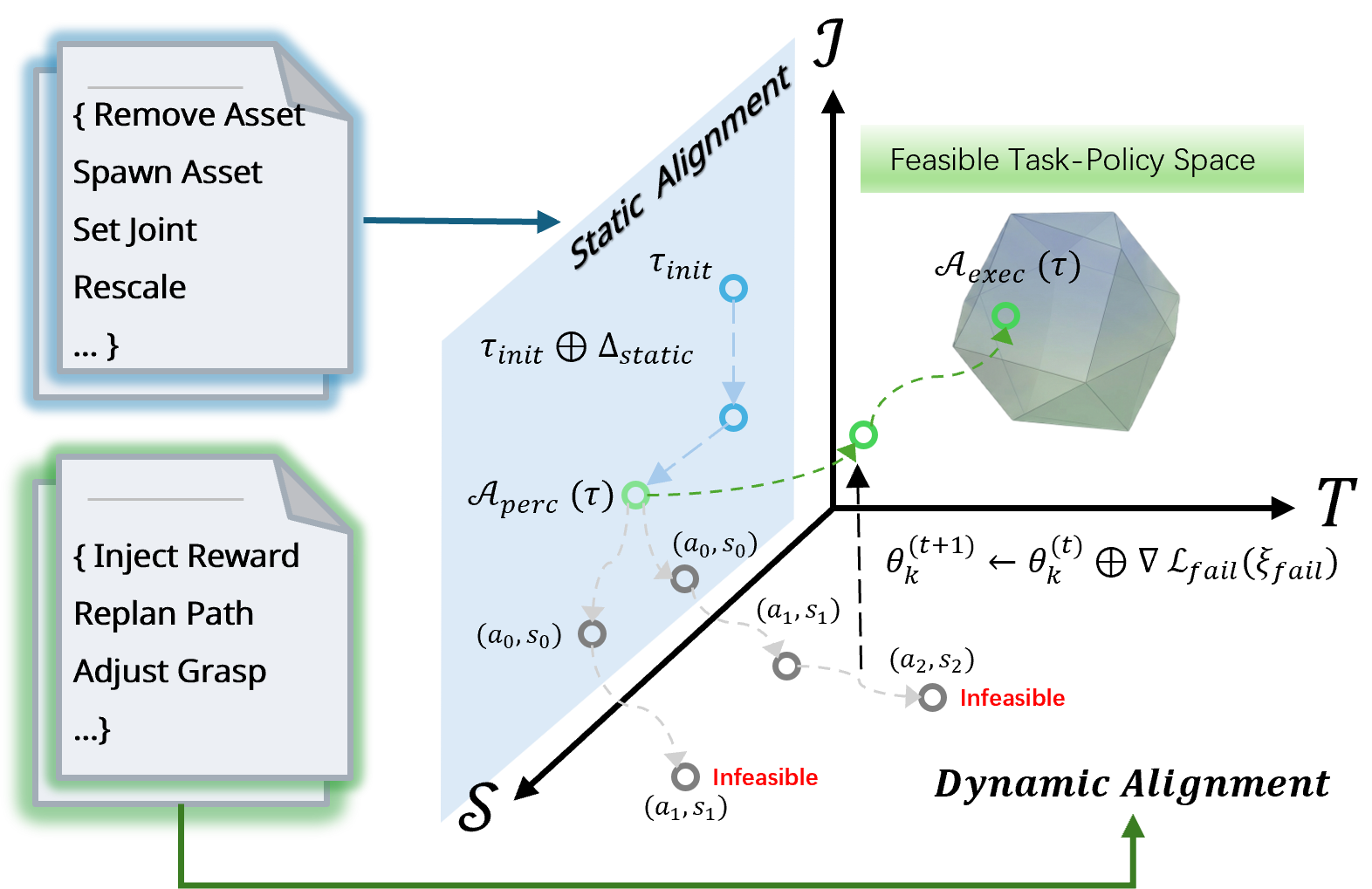}
    \caption{Visualization of the Hierarchical Feasibility Alignment Process. The optimization trajectory evolves in the joint task space $\tau = (\mathcal{I}, \mathcal{S}, \Pi)$. 
  \textbf{(Left) Static Alignment ($\mathcal{A}_{perc}$):} The Ante-Auditor corrects \textbf{Scene ($\mathcal{S}$)} and \textbf{Instruction ($\mathcal{I}$)} incompatibilities via discrete repairs (e.g., \texttt{RESCALE}), projecting $\tau_{init}$ onto the perceptually valid manifold. 
  \textbf{(Right) Dynamic Alignment ($\mathcal{A}_{exec}$):} Targeting the \textbf{Policy ($\Pi$)}, the system refines the solver configuration using semantic feedback and dynamic APIs (e.g., \texttt{INJECT REWARD}), guiding the task from failure (red trajectories) to the \textbf{Feasible Space} (green polyhedron).}
    \label{system_overview}
  \end{center}
\end{figure}
\subsection{Feasibility Alignment Operator}
\label{subsec:alignment_operator}
To bridge the Feasibility Gap, we formally construct a mapping $\mathcal{A}: \mathcal{T} \to \mathcal{T}$, termed the \textbf{Feasibility Alignment Operator}. Unlike stochastic resampling, this operator functions as a constructive manifold projection, transforming an ungrounded hypothesis $\tau \sim P_{\mathcal{G}}$ into a physically valid instance $\tau^*$ while minimizing semantic deviation:
\begin{equation}
    \tau^* = \mathcal{A}(\tau) \triangleq \underset{\mu(\tau') > \delta_{min}}{\arg\min} \ D(\tau', \tau).
\end{equation}
Recognizing the hierarchical nature of physical constraints, we structurally decompose $\mathcal{A}$ into a sequential composition of Perceptual ($\mathcal{A}_{perc}$) and Execution ($\mathcal{A}_{exec}$) sub-operators. This enforces that the task is first mapped onto the geometrically valid subspace, and subsequently projected onto the dynamic intersection.
\begin{equation}
    \mathcal{A} = \mathcal{A}_{exec} \circ \mathcal{A}_{perc}
\end{equation}

We formulate the alignment operator $\mathcal{A}$ as a \textbf{Constructive Residual Injection}. Rather than stochastic resampling, our objective is to synthesize a task residual $\Delta\tau$ that projects the current task $\tau$ into the feasible region defined by the measure $\mu(\tau)$ (Def.~\ref{def:feasibility_measure}).

Formally, at each iteration, the operator seeks an optimal update $\Delta\tau^*$ that maximizes the feasibility likelihood:
\begin{equation}
    \Delta\tau^* = \operatorname*{arg\,max}_{\Delta\tau \in \mathcal{D}} \ \mu(\tau \oplus \Delta\tau)
\end{equation}
where $\oplus$ denotes the symbolic application of the residual, and $\mathcal{D}$ is the space of permissible edits. Since computing the exact $\mu(\tau)$ is intractable, we instantiate this objective using stage-specific surrogates: maximizing the \textbf{Static Validity Score} $\hat{\mu}_{static}$ in Phase I (Sec.~\ref{subsec:static_alignment}), and minimizing the \textbf{Execution Failure Rate} in Phase II (Sec.~\ref{subsec:dynamic_alignment}).

\subsection{Policy Specification via Execution Primitives}
\label{subsec:policy_specification}

To instantiate the constrained policy manifold $\Omega_\pi$, FATE decomposes the global instruction $\mathcal{I}$ into a sequence of atomic primitives $\Pi = \{ \pi_1, \dots, \pi_K \}$. We employ a hybrid solver strategy to address the distinct topological requirements of navigation and manipulation phases:

\paragraph{Geometry-Aware Navigation (MPC).}
For non-contact phases (e.g., moving to a target), we utilize Model Predictive Control. The solver optimizes a trajectory to minimize the distance to sub-goals while strictly respecting collision constraints derived from the scene $\mathcal{S}$. This solver exposes geometric parameters (e.g., obstacle inflation radius) that determine the conservatism of the motion.

\paragraph{Contact-Rich Manipulation (RL).}
For interaction phases (e.g., grasping or opening), we employ Soft Actor-Critic (SAC). To facilitate learning in sparse-reward settings, the MLLM synthesizes a potential-based shaping function parameterized by weights $\theta_{rew}$. This allows the system to guide the agent toward semantic keypoints by shaping the optimization landscape.

\paragraph{Auditor Interface.}
Crucially, these primitives are not static black boxes but parameterized modules $\pi(\cdot \mid \theta)$. They expose a tunable configuration space (including MPC constraints and RL reward weights) which serves as the control interface for the \textbf{Dynamic Alignment Operator} ($\mathcal{A}_{exec}$). This design enables the Auditor to resolve execution failures by actively tuning the solver's logic—effectively performing a semantic policy gradient update—rather than merely resampling the task.

\begin{algorithm}[h]
   \caption{FATE: Feasibility-Aware Task Generation}
   \label{alg:fate_final}
\begin{algorithmic}[1]
   \STATE {\bfseries Input:} Random Seed $s$, Asset Library $\mathcal{L}$
   \STATE {\bfseries Output:} Feasible Task $\tau = (\mathcal{I}, \mathcal{S}, \Omega_{\pi})$
   
   \vspace{0.3em}
   \STATE \textcolor{gray}{\# 1. Initialization: Construct $\tau_{init}$}
   \STATE $\mathcal{I} \leftarrow \text{SampleInstruction}(\mathcal{L})$ 
   \STATE $\mathcal{S} \leftarrow \text{InitScene}(\text{Default Pose}, \text{Raw Scale})$ 
   \STATE $\Omega_{\pi} \leftarrow \text{InitPolicySpec}(\text{Primitives } \Pi = \{\pi_1, \dots, \pi_K\})$ 
   \STATE $\tau \leftarrow (\mathcal{I}, \mathcal{S}, \Omega_{\pi})$ \COMMENT{Initial Feasibility: Unknown}
   
   \vspace{0.3em}
   \STATE \textcolor{gray}{\# 2. Main Loop: Interleaved Alignment}
   \FOR{$k = 1, \dots, \text{MaxIter}$}
       
       \vspace{0.2em}
       \STATE \textcolor{blue}{\# Phase A: Static Alignment (Targeting Scene $\mathcal{S}$)}
       \STATE $valid, info \leftarrow \text{AnteAuditor}(\mathcal{I}, \mathcal{S})$ 
       
       \IF{\textbf{not} $valid$}
           \STATE \COMMENT{Repair geometric \& semantic violations}
           \STATE $\mathcal{S} \leftarrow \text{StaticRepair}(\mathcal{S}, info)$ 
           \STATE $\tau \leftarrow (\mathcal{I}, \mathcal{S}, \Omega_{\pi})$
           \STATE \textbf{continue} 
       \ENDIF

       \vspace{0.2em}
       \STATE \textcolor{blue}{\# Phase B: Dynamic Alignment (Targeting Policy $\Omega_{\pi}$)}
       \STATE $trace \leftarrow \text{RunSimulation}(\tau)$ 
       \STATE $success, info \leftarrow \text{InStepAuditor}(trace)$
       
       \IF{\textbf{not} $success$}
           \STATE \COMMENT{Refine solver parameters \& primitive configs}
           \STATE $\Omega_{\pi} \leftarrow \text{DynamicRepair}(\Omega_{\pi}, info)$
           \STATE $\tau \leftarrow (\mathcal{I}, \mathcal{S}, \Omega_{\pi})$
           \STATE \textbf{continue} 
       \ENDIF
       
       \vspace{0.2em}
       \STATE \textcolor{green!50!black}{\# Feasibility Confirmed}
       \STATE \textbf{return} $\tau$ 
       
   \ENDFOR
   
   \STATE \textbf{return} Failure
\end{algorithmic}
\end{algorithm}

\subsection{Phase I: Static Alignment via Perceptual Projection}
\label{subsec:static_alignment}

The first stage, $\mathcal{A}_{perc}$, acts as a perceptual gatekeeper that projects the initial task hypothesis onto the perceptually valid region. We postulate that static consistency—the alignment between the semantic instruction $\mathcal{I}$ and the scene configuration $\mathcal{S}$—is a strictly necessary condition for dynamic executability.

\paragraph{Auditing: Detecting Hallucinations.}
To estimate the feasibility measure $\mu(\tau)$ (Def.~\ref{def:feasibility_measure}) without expensive simulation, the Ante-Auditor employs a Static Feasibility Check $\hat{\mu}_{static}$. This acts as a logical filter that verifies four consistency conditions: (i) reachability; (ii) affordance matching; (iii) physical plausibility; and (iv) morphological compatibility. A failure in any condition implies the task is fundamentally impossible ($\mu(\tau) \approx 0$), immediately triggering the repair process.

\paragraph{Repair: Scene Projection.}
Upon detecting a violation, the operator synthesizes a static residual $\Delta\tau_{static}$ to rectify the scene or instruction. We define the optimal static repair as the one that maximizes the perceptual validity proxy $\hat{\mu}_{static}$:
\begin{equation}
    \label{eq:static_opt}
    \Delta\tau_{static}^* = \operatorname*{arg\,max}_{\Delta\tau \in \mathcal{D}_{static}} \ \hat{\mu}_{static}(\tau \oplus \Delta\tau)
\end{equation}
The update $\tau \leftarrow \tau \oplus \Delta\tau_{static}^*$ corresponds to a discrete projection onto the geometrically valid manifold. In practice, $\Delta\tau_{static}$ is composed of two parts: \textbf{Scene Reconfiguration} (e.g., rescaling an asset $M_j \to M_j'$ or moving an unreachable object) and \textbf{Instruction Refinement} (e.g., relaxing an over-specific verb). By solving \eqref{eq:static_opt}, $\mathcal{A}_{perc}$ ensures the proposal is physically plausible before simulation.

\begin{figure*}[ht]
    \centering
    \includegraphics[width=\linewidth]{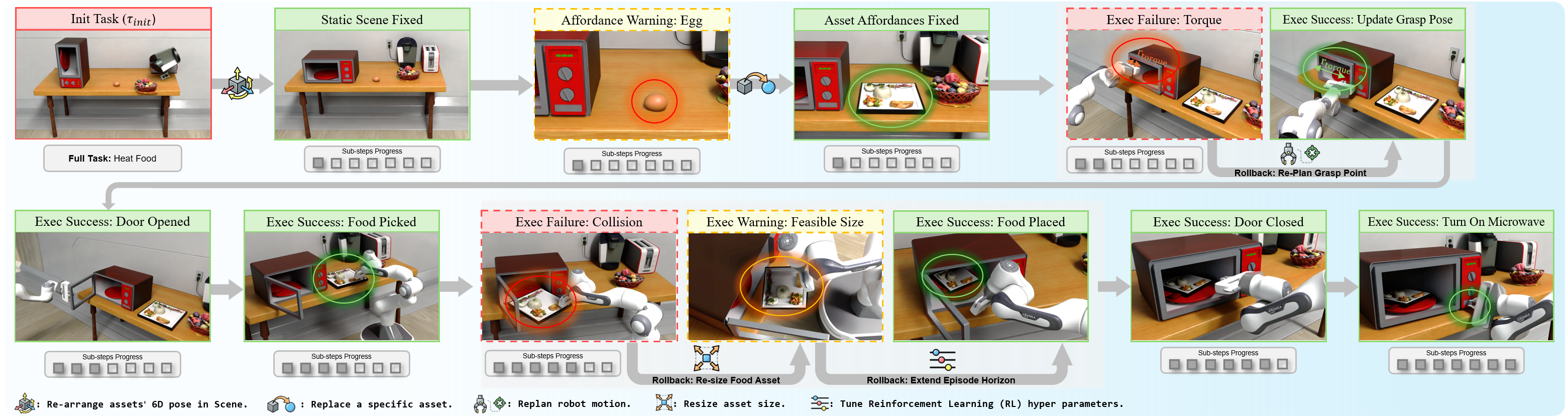}
    \caption{\textbf{Evolutionary Trajectory of the "Heat Food" Task.} 
  Starting from a chaotic initialization ($\tau_{init}$), FATE applies \textbf{Static Repair} to rearrange objects and correct affordances (swapping an egg for a meal). 
  During execution, the system autonomously detects and repairs dynamic failures: a grasp failure is resolved via \textbf{Motion Re-planning}, and an insertion collision is fixed via \textbf{Asset Resizing}. 
  The gray curved arrows illustrate our \textbf{Sub-step Rollback} mechanism, which invokes specific repair APIs (bottom icons) to iteratively refine the task into a successful policy.}
    \label{fig:auditor_dist}
\end{figure*}

\subsection{Phase II: Dynamic Alignment via Semantic Policy Gradient}
\label{subsec:dynamic_alignment}

While static alignment guarantees geometric validity, it cannot foresee failures arising from complex contact dynamics or controller instability. The second stage, $\mathcal{A}_{exec}$, addresses the \textit{Feasibility Gap} during runtime by ensuring the convergence of the policy specification $\Pi$.

\paragraph{Auditing: Monitoring Divergence.}
The In-step Auditor monitors the execution of primitive sequence $\{ \pi_k \}$. A task is flagged as a \textbf{Dynamic Divergence} if the agent fails to effect the desired state change within the allocated horizon or if the solver (MPC/RL) fails to find a valid solution. Common failure modes include high-torque grasp failures, collision-induced deadlocks, sparse reward stagnation, or latent geometric incompatibilities (e.g., tight insertion tolerances) that only manifest during physical interaction.

\paragraph{Repair: Execution Refinement.}
Dynamic failures imply that the policy specification $\Omega_\pi \subset \tau$ is ill-posed for the current scene. To resolve this, FATE triggers a rollback and synthesizes a dynamic residual $\Delta\tau_{dynamic}$ based on the failure trace $\xi_{fail}$. The task is updated via:
\begin{equation}
    \tau_{new} \leftarrow \tau_{old} \oplus \Delta\tau_{dynamic}
\end{equation}
Here, the residual $\Delta\tau_{dynamic}$ specifically targets the policy subspace $\Omega_\pi$. This update acts as a \textbf{Semantic Policy Gradient}, where the Auditor modifies either the physical scene (e.g., \textbf{Geometry Relaxation} by rescaling an object to ease insertion) or the solver logic (e.g., \textbf{Logic Tuning} by injecting a dense reward shaping term). Through these iterative updates, FATE pulls the divergent trajectory back into the feasible region $\Phi_\tau$.


\begin{proposition}[Convergence of Hierarchical Alignment]
\label{prop:hierarchical_convergence}
The sequential operation of static and dynamic alignment operators strictly increases the size of the feasible task set. Formally, the FATE operator $\mathcal{A} = \mathcal{A}_{exec} \circ \mathcal{A}_{perc}$ ensures
\begin{equation}
    | \Phi_{\mathcal{A}(\tau)} | \ge | \Phi_\tau |.
\end{equation}
\end{proposition}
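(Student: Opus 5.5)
The proposition as stated is informal, so the plan is to make it precise and then prove it by a monotonicity argument. I will read $|\Phi_\tau|$ as the $\rho_{init}$-measure of the feasible set, i.e.\ $|\Phi_\tau| := \mu(\tau)$ from Definition~\ref{def:feasibility_measure}. This makes sizes comparable even when $\Omega_\pi$ changes across the update.

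The essential modelling step is to write each sub-operator as an \emph{accept-if-not-worse} update. Concretely, $\mathcal{A}_{perc}$ applies $\Delta\tau_{static}^*$ from \eqref{eq:static_opt}, and $\mathcal{A}_{exec}$ applies $\Delta\tau_{dynamic}$. In both cases the residual is chosen as an $\arg\max$ over a permissible edit set $\mathcal{D}$, and I assume $\mathcal{D}$ contains the null edit $\Delta\tau=\emptyset$ with $\tau\oplus\emptyset=\tau$. This is the natural reading of Algorithm~\ref{alg:fate_final}, where an audit that passes leaves $\tau$ unchanged.

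For $\mathcal{A}_{exec}$, which maximizes the surrogate objective of Section~\ref{subsec:alignment_operator}, the $\arg\max$ immediately gives $\mu(\tau\oplus\Delta\tau^*) \ge \mu(\tau\oplus\emptyset) = \mu(\tau)$.

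For $\mathcal{A}_{perc}$, which maximizes the proxy $\hat{\mu}_{static}$, I need a compatibility assumption linking the proxy to the true measure. Two forms would suffice:
\begin{itemize}
\item $\hat{\mu}_{static}$ is order-preserving with respect to $\mu$ on $\mathcal{D}_{static}$;
\item more weakly, following the paper's own postulate that static validity is necessary, a static edit only ever removes violations. Then either $\mu(\tau)\approx 0$ before repair, so any outcome is no worse, or the audit passes and the null edit is chosen.
\end{itemize}
Under either form, $\mu(\mathcal{A}_{perc}(\tau))\ge\mu(\tau)$.

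The main step is then composition. Set $\tau_1=\mathcal{A}_{perc}(\tau)$ and $\tau_2=\mathcal{A}_{exec}(\tau_1)$. The two inequalities chain to give $\mu(\tau_2)\ge\mu(\tau_1)\ge\mu(\tau)$, which is $|\Phi_{\mathcal{A}(\tau)}|\ge|\Phi_\tau|$.

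The iterative loop of Algorithm~\ref{alg:fate_final} is handled by induction on $k$. Each pass is a composition of monotone maps, so the sequence $\mu(\tau^{(k)})$ is nondecreasing. It is bounded by $1$, so it converges, which justifies the word ``Convergence'' in the title. The word ``strictly'' in the prose can only hold when some repair actually fires and improves the objective. I will prove only the non-strict inequality that is actually displayed, and note that strictness requires $\Delta\tau^*\neq\emptyset$ to be a strict maximizer.

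I expect the main obstacle to be the surrogate gap. In practice the repairs are produced by an MLLM that optimizes $\hat{\mu}_{static}$ or an empirical failure rate, not $\mu$ itself, so monotonicity in $\mu$ does not follow automatically. My first approach is to make the proxy-consistency hypothesis explicit as a standing assumption. If that seems too strong, a fallback is to state the result for the idealized operator $\tau^*=\arg\min_{\mu(\tau')>\delta_{min}}D(\tau',\tau)$. When that feasible set is nonempty, this operator trivially gives $\mu(\tau^*)>\delta_{min}$. One must still handle the case $\mu(\tau)>\delta_{min}$ already: there $D(\tau,\tau)=0$ forces $\tau^*=\tau$, provided $D$ is a metric in the sense that $D(\tau',\tau)=0$ implies $\tau'=\tau$.

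A secondary issue is that changing $\Omega_\pi$ changes the ambient space of $\rho_{init}$. I would either fix $\rho_{init}$ on the global $\Theta$ or require the edits to act on $\Omega_\pi$ by measure-preserving reparametrization.
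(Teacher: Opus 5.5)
Your proposal is correct at the paper's level of rigor. Your static step essentially coincides with the paper's, but your dynamic step uses a different mechanism.

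For $\mathcal{A}_{perc}$, your ``weaker form'' is the paper's argument. The paper uses a Static Necessity lemma $\Phi_\tau\subseteq\Theta_{static}$. A statically conflicted task then has $\Theta_{static}=\emptyset$, so $V(\tau_{init})=\rho_{init}(\Phi_{\tau_{init}})=0$ exactly, and any repaired task is no worse. You should phrase your case split through this inclusion, because ``$\mu(\tau)\approx 0$'' alone does not yield a non-strict inequality.

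For $\mathcal{A}_{exec}$, the paper does not treat the repair as an optimizer. It views rollback-and-repair as producing configurations $\tau^{(0)}=\tau_{static},\tau^{(1)},\dots,\tau^{(T)}$. It counts the aligned task as feasible if the policy succeeds under any of them, and so takes the effective feasible set to be $\Phi_{total}=\bigcup_{k=0}^{T}\Phi^{(k)}$. The inequality then follows from $\Phi^{(0)}\subseteq\Phi_{total}$ and monotonicity of $\rho_{init}$.

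This union device needs no optimality of the repairs. That matters because Phase II actually optimizes an empirical failure rate, not $\mu$. Your claim that the $\arg\max$ ``immediately'' gives $\mu(\tau\oplus\Delta\tau^*)\ge\mu(\tau)$ therefore needs the same proxy-consistency hypothesis you impose on the static stage, and you should state it for both phases.

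Your route has its own advantages. Your accept-if-not-worse reading in effect keeps the best configuration, worth $\max_k\mu(\tau^{(k)})\le\rho_{init}(\Phi_{total})$. It therefore bounds the feasible set of the task actually returned, instead of identifying $\Phi_{\mathcal{A}(\tau)}$ with a union by definition. You also make explicit several points the paper passes over:
\begin{itemize}
\item the audit-passes (null-edit) case;
\item the need for a common ambient measure when $\Omega_\pi$ is edited, which the paper's $\rho_{init}(\bigcup_k\Phi^{(k)})$ silently assumes;
\item the fact that only the non-strict inequality is provable, despite the word ``strictly''.
\end{itemize}
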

The proof could be found in Appendix~\ref{sec:Proof_of_Convergence_for_Hierarchical_Alignment}. 
\paragraph{Implication.} Proposition~\ref{prop:hierarchical_convergence} guarantees that tasks with high semantic diversity but low structural stability are actively projected onto the feasibility manifold $\mu(\tau) > \delta_{min}$.


\section{Experiments}
\label{sec:experiments}

 Our experiments validate three core claims: (1) \textbf{Generative Capability}, ensuring the proposal distribution achieves high semantic and visual diversity; (2) \textbf{Feasibility Awareness}, assessing the embodied brain's fidelity in discriminating feasibility boundaries; and (3) \textbf{Refinement Efficacy}, quantifying the projection operators' success in bridging the feasibility gap.

\subsection{Experimental Setup}
\label{subsec:setup}

While framework-agnostic, we implement our system in NVIDIA Isaac Sim to leverage precise contact physics for the embodied brain's visual-statistical inputs. We evaluate on long-horizon tasks using the RidgebackFranka (7-DoF arm on a holonomic base), which requires coordinated navigation and manipulation, utilizing assets from Objaverse and PartNet-Mobility. We employ a hybrid execution strategy: \textbf{BIT*} for collision-free navigation and \textbf{Soft Actor-Critic (SAC)} for contact-rich manipulation. Both policy and Q-networks are MLPs with hidden layers of size $[256, 256]$, optimized via \textbf{Adam}.

\subsection{Generative Diversity}     
\label{sec:exp_diversity}

\textbf{Metrics and Baselines.} To quantify generative diversity, we employ semantic and visual metrics. We utilize \textbf{Self-BLEU-4} and \textbf{S-BERT Similarity} to measure linguistic redundancy, where lower scores indicate higher diversity. To capture structural variance, we assess \textbf{Visual Diversity} via ViT and CLIP embedding cosine similarity across scenes, ensuring morphological breadth. We benchmark against: (1) \textbf{Expert-Designed Benchmarks} (e.g., RLBench, ManiSkill2, Behavior-100) representing human-curated standards; and (2) \textbf{GenSim-V2}, a state-of-the-art open-loop baseline lacking feasibility feedback.

\begin{table*}[t]
\centering
\caption{\textbf{Generative Diversity Analysis.} We compare the semantic and visual diversity of tasks generated by FATE against expert-designed benchmarks and open-loop baselines. FATE achieves diversity comparable to human-curated benchmarks while ensuring physical feasibility.}
\label{tab:diversity_metrics}
\resizebox{\textwidth}{!}{%
\begin{tabular}{lccccc|c}
\toprule
 & \textbf{RLBench} & \textbf{ManiSkill2} & \textbf{Meta-World} & \textbf{Behavior-100} & \textbf{GenSim-V2} & \textbf{FATE (Ours)} \\ 
\midrule
\textbf{Number of Tasks} & 106 & 20 & 50 & 100 & 70 & \textbf{352} \\ 
\midrule
\multicolumn{7}{l}{\textit{Semantic Diversity (Language)}} \\
Task Desc. - Self-BLEU-4 ($\downarrow$) & 0.317 & 0.674 & 0.322 & 0.299 & 0.378 & \textbf{0.276} \\
Task Desc. - S-BERT Sim ($\downarrow$) & 0.200 & 0.194 & 0.263 & 0.210 & 0.288 & \textbf{0.158} \\ 
\midrule
\multicolumn{7}{l}{\textit{Visual \& Configural Diversity (Scene)}} \\
Scene Image - ViT Sim ($\downarrow$) & 0.375 & 0.332 & 0.517 & 0.389 & 0.717 & \textbf{0.189} \\
Scene Image - CLIP Sim ($\downarrow$) & 0.864 & 0.828 & 0.867 & 0.833 & 0.932 & \textbf{0.754} \\ 
\bottomrule
\end{tabular}%
}
\end{table*}

\textbf{Results.} 
Table~\ref{tab:diversity_metrics} suggests that: FATE maintains strong linguistic and visual diversity relative to open-loop generation. We attribute this to the closed-loop repair mechanism, which discourages repeatedly sampling "safe" but infeasible templates and instead pushes the generator to search for diverse task realizations that remain physically grounded.
\subsection{Task Feasibility}
\label{sec:exp_executability}
\textbf{Metrics and Baselines.}
To quantify the reduction of the feasibility gap, we employ the \textbf{Feasible Task Rate (FTR)}, evaluating whether a task $\tau$ is feasible (i.e., satisfies $\mu(\tau) > \delta_{min}$) under our verification pipeline. We decompose FTR into two sub-metrics: first, the \textbf{Scene Validity Rate (SVR)} measures perceptual alignment via BLIP-2 semantic coherence and geometric checks; second, the \textbf{Execution Validity Rate (EVR)} assesses dynamic feasibility using a Simulation Oracle to verify successful physical rollouts with MPC and RL solvers. We benchmark FATE against a \textbf{Vanilla (No Audit)} lower bound and the state-of-the-art \textbf{GenSim-V2}.
\begin{table}[h]
\centering
\caption{\textbf{Quantifying the Feasibility Gap.} We compare the validity of tasks generated by the vanilla baseline against FATE. We decompose the total yield (FTR) into static Scene Validity (SVR) and dynamic Execution Validity (EVR). }
\label{tab:executability_gap}
\resizebox{0.5\textwidth}{!}{%
\begin{tabular}{lccc}
\toprule
 & \textbf{Scene Validity (SVR)} & \textbf{Execution Validity (EVR)} & \textbf{Feasible Task Rate (FTR)} \\
 & \textit{(Static Validity) ($\uparrow$)} & \textit{(Dynamic Validity) ($\uparrow$)} & \textit{(End-to-End Yield) ($\uparrow$)} \\
\midrule
\multicolumn{4}{l}{\textit{Baselines}} \\
Vanilla (No Audit) & 41.5\% & 30.4\% & 12.6\% \\
GenSim-V2 & 58.2\% & 51.2\% & 29.8\% \\
\midrule
\multicolumn{4}{l}{\textit{FATE (Ours)}} \\
FATE-Static (Ante-Audit) & 93.8\% & 69.7\% & 65.4\% \\
\textbf{FATE (Full Pipeline)} & \textbf{97.5\%} & \textbf{94.5\%} & \textbf{92.1\%} \\
\bottomrule
\end{tabular}%
}
\end{table}
\textbf{Results.} Table~\ref{tab:executability_gap} highlights two key observations. First, static scene validity is necessary but not sufficient: many failures only emerge and discoverable during execution, so purely perceptual filtering leaves a large residual feasibility gap. Second, closing the loop with in-execution auditing and targeted repair substantially improves end-to-end yield, suggesting that feasibility is dominated by a small set of correctable misalignments (e.g., reachability, collisions, reward/horizon settings) rather than a lack of task diversity.
\subsection{Auditing Accuracy}
\label{sec:exp_auditor}
\textbf{Metrics and Baselines.}
The reliability of the FATE framework hinges on the Auditor's capacity to accurately approximate the feasibility indicator $\mathbb{I}[\mu(\tau) > \delta_{min}]$. The auditor detects infeasible tasks by checking for failures across three specific aspects: \textbf{Semantic Issues}, \textbf{Geometric Constraints}, and \textbf{Dynamic Issues}, as visually exemplified in Figure~\ref{fig:teaser}. To quantify this reliability, we measure \textbf{Auditor Efficacy} using the \textbf{F1-score} against a human-labeled ground truth dataset. This metric directly captures the discriminator's accuracy in identifying these diverse failure types while minimizing false positives.
\begin{table}[t]
\centering
\vspace{-.5cm}
\caption{\textbf{Diagnostic Performance of the Auditor.} We report the Precision, Recall, and F1-Score of the FATE Auditor across three error categories on a human-labeled test set. }
\label{tab:auditor_performance}
\resizebox{0.9\linewidth}{!}{%
\begin{tabular}{lccc}
\toprule
\textbf{Diagnostic Category} & \textbf{Precision} & \textbf{Recall} & \textbf{F1-Score} \\
\midrule
Semantic & 0.96 & 0.94 & 0.95 \\
Geometric & 0.88 & 0.85 & 0.86 \\
Dynamic & 0.85 & 0.81 & 0.83 \\
\midrule
\textbf{Overall} & \textbf{0.90} & \textbf{0.87} & \textbf{0.88} \\
\bottomrule
\end{tabular}%
}
\vspace{-.5cm}
\end{table}
\textbf{Results.} Table~\ref{tab:auditor_performance} confirms the Embodied Brain as a high-precision filter. The Ante-Auditor achieves F1-scores of \textbf{0.95} (\textit{Semantic}) and \textbf{0.86} (\textit{Geometric}), effectively pruning static hallucinations like affordance violations before simulation. The \textit{Dynamic} category proves more challenging (F1 \textbf{0.83}) due to the ambiguity between physical impossibility and algorithmic instability. However, a maintained precision of 0.85 ensures the In-step Auditor remains conservative, avoiding false negatives and guaranteeing the reliable feedback necessary for the convergence of the self-correcting loop.
\subsection{Repair Efficacy}
\label{sec:exp_repair}
\textbf{Metric Definition.}
To measure the contribution of the active repair loops, we define the \textbf{Repair Success Rate (RSR)}. This metric calculates the conditional probability that an initially infeasible task hypothesis (i.e., $\mu(\tau) \le \delta_{min}$) is successfully mapped into the feasible region after modification. We evaluate RSR across three distinct stages: \textbf{Ante-Repair} (Static scene adjustments), \textbf{Primitive-Repair} (Runtime logic corrections), and \textbf{RL-Repair} (Learning parameter tuning). A high RSR indicates that the system is not merely functioning as a filter (discarding infeasible data), but is actively projecting the proposal distribution onto the feasibility manifold.
\begin{table}[h]
\centering
\caption{\textbf{Repair Efficacy Analysis.} We report the Repair Success Rate (RSR) across three stages. The "Detected Failures" represents tasks identified as lying outside the manifold, and "Successful Repairs" denotes tasks successfully projected onto the manifold.}
\label{tab:repair_efficacy}
\resizebox{\linewidth}{!}{%
\begin{tabular}{lccc}
\toprule
\textbf{Repair Stage} & \textbf{Detected Failures (Input)} & \textbf{Successful Repairs (Output)} & \textbf{RSR (\%)} \\
\midrule
Ante-Repair (Static) & 215 & 201 & 93.5\% \\
Primitive-Repair (Runtime) & 42 & 39 & 92.9\% \\
RL-Repair (Learning) & 76 & 58 & 76.3\% \\
\midrule
\textbf{Aggregate} & \textbf{333} & \textbf{298} & \textbf{89.5\%} \\
\bottomrule
\end{tabular}%
}
\end{table}
\textbf{Results.} Table~\ref{tab:repair_efficacy} shows that most infeasible proposals are not fundamentally impossible. Most of these failures can be fixed by FATE eventually. The remaining hard cases largely stem from learning-induced brittleness, suggesting that improving the RL-side is the main bottleneck. Overall, these results support our claim that feasibility-aware generation is improving generated task qualities and executability.

Table~\ref{tab:repair_efficacy} quantifies the efficacy of the repair mechanisms. The high \textbf{Aggregate RSR (89.5\%)} reveals that the majority of generated proposals that initially have low feasibility (i.e., $\mu(\tau) \le \delta_{min}$) are essentially sound but misaligned. By sequentially applying the hierarchical adjustment operators—starting with \textit{Ante-Repair} to resolve static (scene-level) violations (93.5\% success), followed by \textit{In-step Repair} to address dynamic (execution-level) failures—FATE successfully recovers these tasks. Notably, even for the most challenging failures requiring \textit{RL-Repair}, the system salvages 76.3\% of cases. 
\begin{figure}[h]
    \centering
    \includegraphics[width=\linewidth]{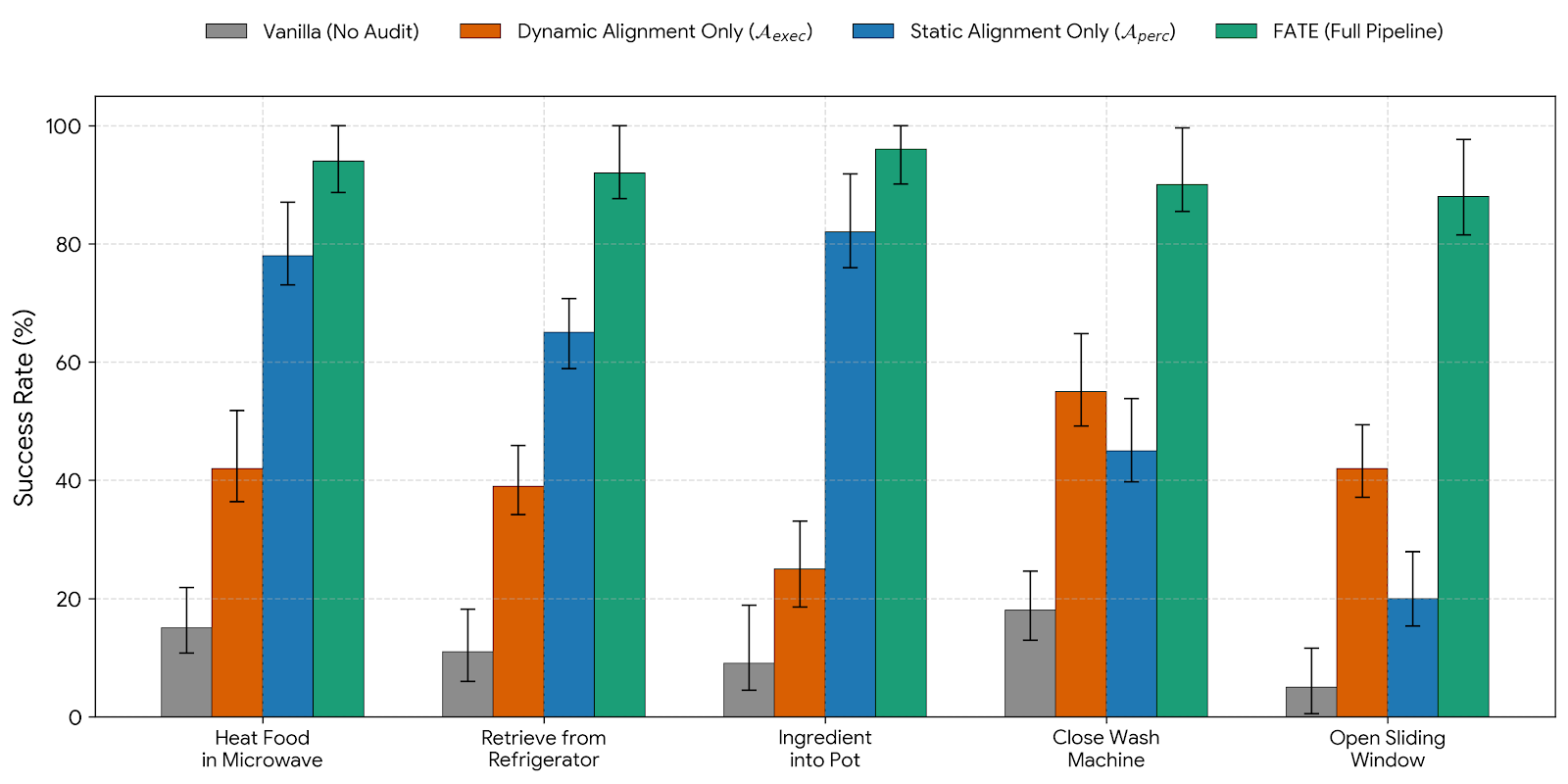}
    \caption{\textbf{Task-Wise Success Rate Analysis.} We evaluate performance across five tasks to disentangle the contributions of the alignment operators. The Static Alignment ($\mathcal{A}_{perc}$) module excels in geometry-sensitive tasks (e.g., \textit{Ingredient into Pot}), while the Dynamic Alignment ($\mathcal{A}_{exec}$) module is critical for contact-rich tasks (e.g., \textit{Sliding Window}). \textbf{FATE (Green)} combines both to achieve robust performance.}
    \label{fig:task_ablation}
\end{figure}
\subsection{Ablation Study}
\label{subsec:ablation}
To dissect the specific contributions of the alignment operators, we evaluate the Feasible Task Rate across five distinct semantic categories. This breakdown reveals the complementary nature of the static and dynamic phases.

For geometry-sensitive tasks such as \textit{''Ingredient into Pot''} or \textit{''Heat Food in Microwave,''} applying \textbf{Static Alignment ($\mathcal{A}_{perc}$)} alone yields significant gains, improving the yield by over $60\%$ compared to the vanilla baseline. In these scenarios, the primary failure mode is geometric incompatibility (e.g., the pot being smaller than the ingredient), which is efficiently resolved by the perceptual projection operator without requiring expensive simulation.

Conversely, for contact-rich tasks like \textit{''Open Sliding Window,''} Static Alignment is insufficient (20.6\%), performing worse than \textbf{Dynamic Alignment ($\mathcal{A}_{exec}$)} (41.8\%). This is because sliding mechanics involve complex friction and articulation dynamics that cannot be verified statically. The Dynamic Alignment operator captures these physical interaction failures, but the \textbf{Full FATE Pipeline} is required to achieve high reliability ($\approx 88\%$). By sequentially applying $\mathcal{A}_{exec} \circ \mathcal{A}_{perc}$, the system ensures that the scene is first geometrically feasible and then dynamically optimized, covering the diverse spectrum of open-ended tasks.

\section{Conclusion and Discussion}
In this work, we propose FATE, reformulating task synthesis as a closed-loop constraint optimization process. By integrating an Embodied Brain for hierarchical audit-and-repair, FATE aligns generated hypotheses with the feasibility manifold, mitigating physical hallucinations to produce a semantically diverse and rigorously grounded curriculum.

Despite significant gains, limitations persist. First, inherent VLM stochasticity means subtle inconsistencies may still bypass our iterative filters. Second, our framework focuses on geometric and kinematic constraints, leaving fine-grained dynamics (e.g., friction, damping) less rigorously modeled. Consequently, tasks dependent on precise contact physics may still suffer execution failures beyond the current repair.

Future work will address these by incorporating real-world failure logs to calibrate physical parameters against reality. Additionally, to mitigate computational latency and stochasticity, we plan to distill the auditing capabilities into a lightweight internal world model. This evolution aims to establish FATE as a self-correcting data engine, strictly grounded in both simulated and real-world physics.








\section*{Impact Statement}
This work advances open-ended robotic curriculum generation by closing the loop between language-driven task proposal and embodied feasibility verification. By substantially reducing execution failures and simulation waste, the approach can lower the compute cost of data generation and improve the reliability of downstream policy learning, potentially benefiting research reproducibility and accelerating development of assistive and service robots that operate in human environments. At the same time, methods that scale task synthesis may also lower the barrier to developing harmful or unauthorized automation, and feasibility in simulation does not guarantee safety in the real world—especially under unmodeled dynamics, embodiment mismatch, or auditing errors. Our experiments are confined to simulation, and failure modes related to contact dynamics and learned-controller brittleness may persist. We therefore position the system as a data engine and evaluation tool rather than a deployment-ready controller, and recommend responsible use with domain-specific validation, human oversight, and safety constraints.


\bibliography{example_paper}
\bibliographystyle{icml2026}

\newpage
\appendix
\onecolumn

\section{Theoretical Analysis of Iterative Repair}
\label{sec:appendix_proof}

In this section, we provide the detailed mathematical proof for \textbf{Proposition \ref{prop:convergence}}, establishing the linear convergence of the FATE repair mechanism. This analysis models the interaction between the Auditor (Embodied Brain) and the Task Manifold as a gradient-based dynamical system optimization.

\subsection{Setup and Definitions}

Let $\mathcal{T}$ denote the continuous task parameter space. To analyze the convergence properties, we define an \textbf{Infeasibility Potential} $\mathcal{J}: \mathcal{T} \to \mathbb{R}_{\ge 0}$ based on the feasibility measure $\mu(\tau)$ introduced in Definition \ref{def:feasibility_measure}:
\begin{equation}
    \mathcal{J}(\tau) \triangleq \max\left(0, \delta_{min} - \mu(\tau)\right)
\end{equation}
Our objective is to minimize this potential. The optimization proceeds via the update rule...

\subsection{Regularity Assumptions}

To guarantee convergence, we rely on standard assumptions in non-convex optimization, adapted for our "Approximate Oracle" (the LLM-based Auditor) setting.

\begin{assumption}[$L$-Smoothness]
    \label{ass:smoothness_detailed}
    The loss function $\mathcal{J}$ is differentiable and has an $L$-Lipschitz continuous gradient on the manifold. That is, for any $\tau, \tau'$, there exists a constant $L > 0$ such that:
    \begin{equation}
        \| \nabla \mathcal{J}(\tau) - \nabla \mathcal{J}(\tau') \| \le L \| \tau - \tau' \|
    \end{equation}
    This implies the standard quadratic upper bound inequality:
    \begin{equation}
        \label{eq:quadratic_bound}
        \mathcal{J}(\tau') \le \mathcal{J}(\tau) + \langle \nabla \mathcal{J}(\tau), \tau' - \tau \rangle + \frac{L}{2} \| \tau' - \tau \|^2
    \end{equation}
\end{assumption}

\begin{assumption}[Auditor Alignment Condition]
    \label{ass:alignment_detailed}
    The Auditor does not compute the exact gradient $\nabla \mathcal{J}(\tau_k)$ (which requires expensive Monte Carlo simulation). Instead, it provides a semantic direction $d_k$ that is strictly aligned with the true gradient. We assume there exist constants $c_1, c_2 > 0$ such that:
    \begin{equation}
        \langle d_k, \nabla \mathcal{J}(\tau_k) \rangle \ge c_1 \| \nabla \mathcal{J}(\tau_k) \|^2 \quad \text{(Sufficient Descent)}
    \end{equation}
    \begin{equation}
        \| d_k \| \le c_2 \| \nabla \mathcal{J}(\tau_k) \| \quad \text{(Bounded Magnitude)}
    \end{equation}
\end{assumption}

\begin{assumption}[Polyak-Łojasiewicz (PL) Condition]
    \label{ass:pl_detailed}
    Within the basin of attraction of the feasible manifold $\mathcal{F} = \{ \tau \mid \mathcal{J}(\tau) = 0 \}$, the loss satisfies the PL condition with parameter $\nu > 0$:
    \begin{equation}
        \frac{1}{2} \| \nabla \mathcal{J}(\tau) \|^2 \ge \nu \mathcal{J}(\tau)
    \end{equation}
\end{assumption}

\subsection{Proof of Linear Convergence (Proposition \ref{prop:convergence})}

\begin{proposition}[Linear Convergence of Iterative Repair]
\label{prop:convergence}
Under Assumptions~\ref{ass:smoothness_detailed}--\ref{ass:pl_detailed}, for a sufficiently small step size $\eta$, the iterative repair update $\tau_{k+1} = \tau_k - \eta d_k$ converges linearly to the feasible manifold $\mathcal{F} = \{\tau \mid \mu(\tau) \ge \delta_{min}\}$.
\end{proposition}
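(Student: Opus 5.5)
The argument follows the standard descent-lemma-plus-PL template for inexact gradient methods, with $d_k$ in place of the exact gradient. Fix an iterate $\tau_k$ in the basin of attraction where Assumption~\ref{ass:pl_detailed} holds. We apply the quadratic upper bound \eqref{eq:quadratic_bound} from Assumption~\ref{ass:smoothness_detailed} with $\tau = \tau_k$ and $\tau' = \tau_{k+1} = \tau_k - \eta d_k$:
\begin{equation*}
\mathcal{J}(\tau_{k+1}) \le \mathcal{J}(\tau_k) - \eta \langle \nabla \mathcal{J}(\tau_k), d_k\rangle + \frac{L\eta^2}{2}\|d_k\|^2 .
\end{equation*}
Substituting the sufficient-descent and bounded-magnitude inequalities of Assumption~\ref{ass:alignment_detailed} gives
\begin{equation*}
\mathcal{J}(\tau_{k+1}) \le \mathcal{J}(\tau_k) - \eta\Bigl(c_1 - \frac{L c_2^2 \eta}{2}\Bigr)\|\nabla\mathcal{J}(\tau_k)\|^2 .
\end{equation*}
We then choose the step size as $0<\eta\le c_1/(L c_2^2)$. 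With this choice the bracket is at least $c_1/2$, so each step decreases $\mathcal{J}$ by at least $\tfrac{\eta c_1}{2}\|\nabla\mathcal{J}(\tau_k)\|^2$.

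Next, Assumption~\ref{ass:pl_detailed} gives $\|\nabla\mathcal{J}(\tau_k)\|^2 \ge 2\nu\,\mathcal{J}(\tau_k)$. Inserting this into the previous bound yields the contraction
\begin{equation*}
\mathcal{J}(\tau_{k+1}) \le (1-\eta c_1 \nu)\,\mathcal{J}(\tau_k).
\end{equation*}
We check that $\rho := 1-\eta c_1\nu \in [0,1)$. Combining PL with $L$-smoothness shows $\nu \le L$ whenever $\mathcal{J}\not\equiv 0$. Also $c_1 \le c_2$, by Cauchy--Schwarz applied to the two alignment inequalities. Hence $\eta c_1\nu \le c_1^2 \nu/(L c_2^2) \le 1$. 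Induction then gives $\mathcal{J}(\tau_k) \le \rho^k \mathcal{J}(\tau_0)$, which is linear convergence of the infeasibility potential to $0$. Since $\mathcal{J}(\tau)=0$ exactly when $\mu(\tau)\ge\delta_{min}$, this is convergence to $\mathcal{F}$ in the sense of the statement.

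The main obstacle is \textbf{invariance of the basin}: PL is assumed only near $\mathcal{F}$, so we must show the iterates never leave that region. The monotone decrease $\mathcal{J}(\tau_{k+1})\le\mathcal{J}(\tau_k)$ keeps every iterate in the sublevel set $\{\mathcal{J}\le \mathcal{J}(\tau_0)\}$. We would therefore either assume this sublevel set lies inside the basin, or bound the path length directly. For the latter, $\|\tau_{k+1}-\tau_k\| \le \eta c_2\|\nabla\mathcal{J}(\tau_k)\|$, and the descent inequality gives $\|\nabla\mathcal{J}(\tau_k)\| \le \sqrt{2(\mathcal{J}(\tau_k)-\mathcal{J}(\tau_{k+1}))/(\eta c_1)}$. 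Since $\mathcal{J}(\tau_k)$ decays geometrically, the steps form a summable geometric series. This yields total displacement $O(\sqrt{\mathcal{J}(\tau_0)})$, so a $\tau_0$ close enough to $\mathcal{F}$ keeps all iterates in the basin. As a by-product, $\tau_k$ is Cauchy and converges to a point of $\mathcal{F}$.

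A secondary technical point is that $\mathcal{J}=\max(0,\delta_{min}-\mu)$ is not differentiable where $\mu=\delta_{min}$. Assumption~\ref{ass:smoothness_detailed} simply posits differentiability, so we would use it as given. If the iteration lands exactly in $\mathcal{F}$, then $\mathcal{J}=0$ and the bound holds trivially from that iterate on.
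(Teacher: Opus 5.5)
Your proof is correct and follows the same chain as the paper's proof: descent lemma, then the alignment bounds, then PL, then induction. Your step-size choice $\eta \le c_1/(L c_2^2)$ only yields the cleaner contraction factor $1-\eta c_1\nu$ in place of the paper's $1-2\nu(\eta c_1 - L\eta^2 c_2^2/2)$. Your check that the factor is nonnegative (via $\nu\le L$ and $c_1\le c_2$) and your basin-invariance induction address points the paper leaves implicit, since it applies PL at every iterate without showing the iterates stay in the basin.
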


\begin{proof}
    We begin by substituting the update rule $\tau_{k+1} = \tau_k - \eta d_k$ into the quadratic upper bound \eqref{eq:quadratic_bound}:
    \begin{equation}
        \mathcal{J}(\tau_{k+1}) \le \mathcal{J}(\tau_k) + \langle \nabla \mathcal{J}(\tau_k), -\eta d_k \rangle + \frac{L}{2} \| -\eta d_k \|^2
    \end{equation}
    
    Rearranging the terms and applying the scalar properties of the inner product:
    \begin{equation}
        \mathcal{J}(\tau_{k+1}) \le \mathcal{J}(\tau_k) - \eta \langle \nabla \mathcal{J}(\tau_k), d_k \rangle + \frac{L \eta^2}{2} \| d_k \|^2
    \end{equation}
    
    Next, we apply the \textbf{Auditor Alignment Assumption} (\ref{ass:alignment_detailed}). We use the lower bound for the inner product and the upper bound for the norm of $d_k$:
    \begin{equation}
        \mathcal{J}(\tau_{k+1}) \le \mathcal{J}(\tau_k) - \eta c_1 \| \nabla \mathcal{J}(\tau_k) \|^2 + \frac{L \eta^2}{2} (c_2 \| \nabla \mathcal{J}(\tau_k) \|)^2
    \end{equation}
    
    Factor out $\| \nabla \mathcal{J}(\tau_k) \|^2$:
    \begin{equation}
        \mathcal{J}(\tau_{k+1}) \le \mathcal{J}(\tau_k) - \left( \eta c_1 - \frac{L \eta^2 c_2^2}{2} \right) \| \nabla \mathcal{J}(\tau_k) \|^2
    \end{equation}
    
    To ensure a decrease in loss, the coefficient of the gradient norm must be positive. This imposes a constraint on the step size $\eta$:
    \begin{equation}
        \eta c_1 - \frac{L \eta^2 c_2^2}{2} > 0 \implies \eta < \frac{2 c_1}{L c_2^2}
    \end{equation}
    
    Assuming $\eta$ satisfies this condition, we define the descent coefficient $\mathcal{C}_\eta = \eta c_1 - \frac{L \eta^2 c_2^2}{2} > 0$.
    Now, applying the \textbf{PL Condition} (\ref{ass:pl_detailed}), we substitute $\| \nabla \mathcal{J}(\tau_k) \|^2 \ge 2\nu \mathcal{J}(\tau_k)$:
    \begin{equation}
        \mathcal{J}(\tau_{k+1}) \le \mathcal{J}(\tau_k) - \mathcal{C}_\eta \cdot (2\nu \mathcal{J}(\tau_k))
    \end{equation}
    
    Simplifying the expression:
    \begin{equation}
        \mathcal{J}(\tau_{k+1}) \le \mathcal{J}(\tau_k) (1 - 2\nu \mathcal{C}_\eta)
    \end{equation}
    
    Let $\rho = 2\nu (\eta c_1 - \frac{L \eta^2 c_2^2}{2})$. For a sufficiently small step size $\eta$, we can ensure $0 < \rho < 1$. Thus, we obtain the linear recurrence relation:
    \begin{equation}
        \mathcal{J}(\tau_{k+1}) \le (1 - \rho) \mathcal{J}(\tau_k)
    \end{equation}
    
    By induction, after $k$ steps:
    \begin{equation}
        \mathcal{J}(\tau_k) \le (1 - \rho)^k \mathcal{J}(\tau_0)
    \end{equation}
    
    Since $(1-\rho) < 1$, $\lim_{k \to \infty} (1-\rho)^k = 0$, implying $\lim_{k \to \infty} \mathcal{J}(\tau_k) = 0$. Therefore, the task $\tau_k$ converges linearly to the feasible region where $\mu(\tau) \ge \delta_{min}$.
\end{proof}

\section{Proof of Convergence for Hierarchical Alignment}
\label{sec:Proof_of_Convergence_for_Hierarchical_Alignment}

In this section, we provide the detailed proof for \textbf{Proposition \ref{prop:hierarchical_convergence}}. We demonstrate that the FATE operator $\mathcal{A} = \mathcal{A}_{exec} \circ \mathcal{A}_{perc}$ strictly maintains or expands the volume of the feasible policy set.

\subsection{Measure-Theoretic Formulation}

Let $(\Omega_\pi, \Sigma, \rho_{init})$ be the measure space of policy parameters. For any task $\tau$, the feasible set is defined as:
\begin{equation}
    \Phi_\tau = \{ \theta \in \Omega_\pi \mid \mathbb{E}_{\xi \sim (\pi_\theta, \mathcal{S})} [\Psi_\mathcal{I}(\xi)] \ge 1 - \epsilon \}
\end{equation}
The feasibility volume is the measure $V(\tau) = \rho_{init}(\Phi_\tau)$. We analyze the monotonicity of $V(\cdot)$ through the two stages of alignment.

\subsection{Proof of Monotonicity}

\paragraph{Stage 1: Static Alignment ($\mathcal{A}_{perc}$).}
Let $\tau_{init} = (\mathcal{I}, \mathcal{S}, \Omega_\pi)$ be the initial task proposal.
We define the \textit{Static Validity Set} $\Theta_{static}(\mathcal{S}, \mathcal{I}) \subset \Omega_\pi$ as the set of policies that do not violate fundamental physical laws (e.g., trying to grasp an object that is kinematically unreachable).
\begin{lemma}[Static Necessity]
    For any policy $\theta$, if $\theta \in \Phi_\tau$ (is dynamically feasible), then $\theta \in \Theta_{static}$ (must be statically valid). Thus, $\Phi_\tau \subseteq \Theta_{static}$.
\end{lemma}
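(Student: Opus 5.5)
The plan is to prove the inclusion $\Phi_\tau \subseteq \Theta_{static}$ by contraposition. Fix $\theta \in \Omega_\pi$ with $\theta \notin \Theta_{static}(\mathcal{S},\mathcal{I})$ and show that $\theta \notin \Phi_\tau$, i.e. that $\mathbb{E}_{\xi \sim (\pi_\theta,\mathcal{S})}[\Psi_\mathcal{I}(\xi)] < 1-\epsilon$.

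The first step is to make $\Theta_{static}$ precise, because as written it is only described informally. I would define it through the four consistency conditions audited by $\hat{\mu}_{static}$ in Phase I: reachability, affordance matching, physical plausibility, and morphological compatibility. A policy $\theta$ is \emph{statically invalid} if its rollouts in $\mathcal{S}$ almost surely require an interaction that violates one of these conditions before the predicate $\Psi_\mathcal{I}$ can be satisfied. Examples are a grasp on an entity $j$ whose pose $q_j$ lies outside the robot's workspace, or an action on a geometry region whose affordance label under $\Lambda_j$ does not support it. Concretely, I would write $\Theta_{static}$ as the complement of the set of $\theta$ for which $\Pr_{\xi\sim(\pi_\theta,\mathcal{S})}[\xi \in \Xi_{viol}] > \epsilon$, where $\Xi_{viol}\subset\Xi$ is the set of trajectories that hit a static violation.

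The second step is to note that the scene dynamics induced by $\mathcal{S}$ cannot realise a violating interaction. If $\xi\in\Xi_{viol}$, the required state change (object grasped, articulated joint moved, and so on) never occurs, so $\Psi_\mathcal{I}(\xi)=0$ on $\Xi_{viol}$. This gives the bound
\begin{equation*}
\mathbb{E}[\Psi_\mathcal{I}(\xi)] \le 1 - \Pr[\xi\in\Xi_{viol}] < 1-\epsilon,
\end{equation*}
so $\theta\notin\Phi_\tau$. Taking the contrapositive over all $\theta$ yields $\Phi_\tau\subseteq\Theta_{static}$. Measurability of both sets in $\Sigma$ then follows from measurability of $\theta\mapsto \Pr[\xi\in\Xi_{viol}]$, which we assume for the trajectory kernel.

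The main obstacle is the second step, namely justifying $\Psi_\mathcal{I}\equiv 0$ on $\Xi_{viol}$. This is really a modelling assumption about how static violations relate to the semantic predicate, not something derivable from Definition~\ref{def:feasible_set}. I would therefore state it explicitly as a compatibility assumption between the auditor's four conditions and $\Psi_\mathcal{I}$. I would also choose the threshold $\epsilon$ in the definition of $\Theta_{static}$ to match the tolerance in $\Phi_\tau$, so that the strict inequality goes through.
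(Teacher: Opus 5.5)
Your argument is correct, and it supplies an argument where the paper gives none. In Appendix~\ref{sec:Proof_of_Convergence_for_Hierarchical_Alignment} the lemma is stated without proof, and $\Theta_{static}$ is described only informally as the policies that ``do not violate fundamental physical laws.'' The inclusion $\Phi_\tau\subseteq\Theta_{static}$ is therefore treated as definitional: a policy judged under the true scene dynamics cannot succeed by doing something physically impossible.

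You instead fix a concrete definition, $\theta\notin\Theta_{static}$ iff $\Pr_{\xi\sim(\pi_\theta,\mathcal{S})}[\xi\in\Xi_{viol}]>\epsilon$. You also isolate the one substantive hypothesis, $\Psi_{\mathcal{I}}\equiv 0$ on $\Xi_{viol}$. The bound $\mathbb{E}[\Psi_{\mathcal{I}}(\xi)]\le 1-\Pr[\xi\in\Xi_{viol}]<1-\epsilon$ then closes the contrapositive.

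Your route buys transparency. It shows the lemma does not follow from Definition~\ref{def:feasible_set} alone; it rests on a compatibility assumption between the auditor's static conditions and the semantic predicate. It also makes the paper's next step checkable: ``static conflicts give $\Theta_{static}=\emptyset$'' now means every policy hits a violation with probability exceeding $\epsilon$, whence $\Phi_{\tau_{init}}=\emptyset$. The paper's informal version buys only brevity.

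Two small tidy-ups. First, your verbal description (rollouts that \emph{almost surely} require a violation) and your formal threshold $>\epsilon$ define different sets. Keep the formal one, which is sharp: no stricter validity threshold preserves the inclusion, since a policy with $\Pr[\xi\in\Xi_{viol}]=\epsilon$ and $\Psi_{\mathcal{I}}=1$ off $\Xi_{viol}$ does lie in $\Phi_\tau$. The almost-sure version also works (for $\epsilon<1$) but yields a weaker inclusion.

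Second, measurability is irrelevant to the inclusion itself. Where it matters, for $\rho_{init}(\Phi_\tau)$, you need measurability of $\theta\mapsto\mathbb{E}[\Psi_{\mathcal{I}}(\xi)]$. That comes from the trajectory kernel together with measurability of $\Psi_{\mathcal{I}}$, not from the map $\theta\mapsto\Pr[\xi\in\Xi_{viol}]$.
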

If the initial task has static conflicts (e.g., severe interpenetration), then $\Theta_{static} = \emptyset$, which implies $\Phi_{\tau_{init}} = \emptyset$ and $V(\tau_{init}) = 0$.
The Static Alignment operator $\mathcal{A}_{perc}$ applies a repair $\Delta_{static}$ to ensure static compatibility, thereby creating a non-empty static validity set $\Theta_{static}' \neq \emptyset$.
While this does not guarantee dynamic feasibility, it transforms the volume from a strictly zero state to a potentially positive state. Thus:
\begin{equation}
    V(\mathcal{A}_{perc}(\tau_{init})) \ge V(\tau_{init}) = 0
\end{equation}

\paragraph{Stage 2: Dynamic Alignment ($\mathcal{A}_{exec}$).}
Let $\tau_{static} = \mathcal{A}_{perc}(\tau_{init})$ be the statically valid task.
The Dynamic Alignment operator employs a "Rollback-and-Repair" mechanism. We model the execution of $\mathcal{A}_{exec}$ as generating a sequence of modified task/solver configurations $\mathbf{T} = \{ \tau^{(0)}, \tau^{(1)}, \dots, \tau^{(T)} \}$, where $\tau^{(0)} = \tau_{static}$.
Crucially, the FATE system accepts a task as "feasible" if the policy succeeds under *any* of the refined configurations found during the iterative process.
Let $\Phi^{(k)}$ be the feasible policy set associated with configuration $\tau^{(k)}$. The effective feasible set for the aligned task $\mathcal{A}_{exec}(\tau)$ is the union of feasible sets across the repair trajectory:
\begin{equation}
    \Phi_{total} = \bigcup_{k=0}^{T} \Phi^{(k)}
\end{equation}
Since the initial configuration corresponds to $k=0$, we have the trivial inclusion:
\begin{equation}
    \Phi^{(0)} \subseteq \bigcup_{k=0}^{T} \Phi^{(k)} = \Phi_{total}
\end{equation}
By the monotonicity of measures, if $A \subseteq B$, then $\rho_{init}(A) \le \rho_{init}(B)$. Therefore:
\begin{equation}
    V(\tau^{(0)}) \le \rho_{init}(\Phi_{total})
\end{equation}
Assuming the Auditor provides non-destructive updates (i.e., it does not purely shrink the feasible space of subsequent steps without adding new feasible regions), the effective solution space accessible to the system expands with each iteration.

\paragraph{Conclusion.}
Combining the results from Stage 1 and Stage 2:
\begin{equation}
    V(\mathcal{A}(\tau_{init})) = \rho_{init}(\Phi_{total}) \ge V(\tau_{static}) \ge V(\tau_{init})
\end{equation}
Consequently, the hierarchical alignment process guarantees that the measure of the feasible solution space is strictly non-decreasing, verifying Proposition \ref{prop:hierarchical_convergence}. \qed


\section{Implementation Details of the FATE Pipeline}
\label{app:implementation}

\begin{quote}
\textbf{Note:} This appendix details the constructive realization of the \textbf{Feasibility Alignment Operator} $\mathcal{A}$ defined in Section \ref{sec:methodology}. We describe how the theoretical projection steps ($\mathcal{A}_{perc}$ and $\mathcal{A}_{exec}$) are instantiated as concrete algorithm loops governed by the \textbf{Embodied Brain} (detailed in Appendix \ref{app:EmbodiedBrain}).
\end{quote}

\subsection{Initialization: Diverse Task Proposal}
\label{sub:init_proposal}

Before the alignment phases begin, we first sample from the generative distribution $\tau_{init} \sim P_{\mathcal{G}}$ (as defined in Eq. 3). The goal is to maximize semantic diversity before feasibility constraints are rigorously applied.

\paragraph{Semantic Candidate Retrieval.}
We utilize assets from the \texttt{Objaverse} \cite{deitke2023objaverse} and \texttt{PartNet-Mobility} \cite{Mo_2019_CVPR} datasets. To ensure the diversity of the proposal distribution, the Embodied Brain performs a broad semantic search based on high-level tags (e.g., \texttt{microwave}, \texttt{oven}, \texttt{food}). At this stage, strictly geometric filtering is relaxed to allow for novel object combinations.

\paragraph{Scene Hypothesis Generation.}
The system outputs a preliminary Scene Graph $\mathcal{S}_{init}$ and an Instruction $\mathcal{I}$. This is parsed into a simulation-ready USD (Universal Scene Description) loader script, initializing the task tuple $\tau_{init} = (\mathcal{I}, \mathcal{S}_{init}, \Omega_{\pi}^{init})$.

\subsection{Phase I: Static Alignment Operator ($\mathcal{A}_{perc}$)}
\label{sub:phase1_static}

This phase implements the \textbf{Perceptual Projection Operator} $\mathcal{A}_{perc}$ (Section \ref{subsec:static_alignment}). Its objective is to maximize the static feasibility proxy $\hat{\mu}_{static}$ by injecting a semantic residual $\Delta_{static}$.

\paragraph{Diagnostic Taxonomy (Estimating $\hat{\mu}_{static}$).}
The Ante-Auditor evaluates the task tuple against the four feasibility axes defined in Section \ref{subsec:static_alignment}:
\begin{itemize}
    \item \textbf{Affordance Matching:} Checks if semantic categories support the requested action (e.g., matching "heat" with "microwave").
    \item \textbf{Geometric Reachability:} Checks if the Scene Graph $\mathcal{S}$ contains objects placed outside the robot's workspace.
    \item \textbf{Physical Plausibility:} Checks for severe interpenetration or unstable placement in the initial configuration $\mathcal{S}_{init}$.
    \item \textbf{Morphological Compatibility:} Checks if object scales are compatible (e.g., "ingredient fits inside pot").
\end{itemize}

\paragraph{Repair Action Space (Implementing $\oplus \Delta_{static}$).}
The update step $\tau' \leftarrow \tau \oplus \Delta_{static}$ is realized through a set of atomic scene manipulation APIs (Table \ref{tab:static_api}). These primitives directly address the violations in $\hat{\mu}_{static}$.

\begin{table}[h]
\caption{\textbf{Static Repair API ($\mathcal{A}_{perc}$).} These primitives implement the discrete update $\Delta_{static}$ to resolve perceptual hallucinations defined in Section \ref{subsec:static_alignment}.}
\label{tab:static_api}
\begin{center}
\begin{small}
\begin{tabular}{p{0.35\linewidth} p{0.6\linewidth}}
\toprule
\textbf{API Signature} & \textbf{Functionality \& Theoretical Mapping} \\
\midrule
\texttt{SWAP\_ASSET(tgt\_id, query)} & Substitutes an object to maximize \textbf{Affordance Matching}. Used when $\Psi_{\mathcal{I}}$ cannot be satisfied by the current object semantics. \\
\midrule
\texttt{SPAWN\_ASSET(query, pose)} & Instantiates missing nodes in the Scene Graph $\mathcal{S}$. Used when the instruction $\mathcal{I}$ implies objects not present in $\mathcal{S}_{init}$. \\
\midrule
\texttt{REMOVE\_ASSET(tgt\_id)} & Prunes the Scene Graph. Used to remove clutter that violates \textbf{Geometric Reachability} or creates collision conflicts. \\
\midrule
\texttt{SET\_JOINT(tgt\_id, idx, val)} & Forces configuration state $q_j$ to a specific value. Used to resolve logical inconsistencies (e.g., "Open a door" that is already open). \\
\midrule
\texttt{TRANSFORM\_POSE(tgt\_id, $\Delta$)} & Applies a rigid body transform. Used to resolve interpenetration (\textbf{Physical Plausibility}). \\
\midrule
\texttt{RESCALE(tgt\_id, factor)} & Scales the physical manifold $M_j$. Directly addresses \textbf{Morphological Compatibility} (e.g., resizing an object to fit a container). \\
\bottomrule
\end{tabular}
\end{small}
\end{center}
\end{table}

\begin{algorithm}[tb]
   \caption{Static Alignment Loop ($\mathcal{A}_{perc}$)}
   \label{alg:static_repair}
\begin{algorithmic}
   \STATE {\bfseries Input:} Initial Task $\tau = (\mathcal{I}, \mathcal{S})$
   \STATE $\mathcal{S}_{curr} \leftarrow \mathcal{S}$
   \WHILE{True}
       \STATE \textcolor{gray}{// Estimate Static Feasibility Proxy $\hat{\mu}_{static}$}
       \STATE $Critique, IsValid \leftarrow \text{AnteAuditor}(\mathcal{I}, \mathcal{S}_{curr})$ 
       \IF{$IsValid$}
           \STATE \textbf{break}
       \ENDIF
       \STATE \textcolor{gray}{// Synthesize residual $\Delta_{static}$}
       \STATE $Repair\_Op \leftarrow \text{Brain.generate\_repair}(Critique)$
       \STATE \textcolor{gray}{// Apply projection $\mathcal{S} \leftarrow \mathcal{S} \oplus \Delta_{static}$}
       \STATE $\mathcal{S}_{curr} \leftarrow \text{Simulator}.apply(Repair\_Op, \mathcal{S}_{curr})$
       \IF{$MaxIter$ reached}
           \STATE \textbf{return} Failure
       \ENDIF
   \ENDWHILE
   \STATE \textbf{return} $\mathcal{S}_{curr}$
\end{algorithmic}
\end{algorithm}

\subsection{Phase II: Dynamic Alignment Operator ($\mathcal{A}_{exec}$)}
\label{sub:phase2_dynamic}

This phase instantiates the \textbf{Dynamic Alignment Operator} $\mathcal{A}_{exec}$ (Section \ref{subsec:dynamic_alignment}). It focuses on refining the Policy Specification $\Omega_\pi$ and Solver Hyperparameters $\theta$ to ensure the feasible policy set $\Phi_\tau$ is non-empty.

\paragraph{Hierarchical Decomposition \& Checkpointing.}
The task is decomposed into primitives $\Pi = \{\pi_1, \dots, \pi_K\}$. To support the iterative optimization of the \textbf{Semantic Policy Gradient} (Eq. 9), we employ a state checkpointing mechanism. This allows the system to rollback and apply the update $\theta^{(t+1)} \leftarrow \theta^{(t)} \oplus \nabla_{sem}$ without resetting the entire environment.

\paragraph{Dynamic Intervention API (Implementing $\nabla_{sem}$).}
When the In-Step Auditor detects a \textbf{Dynamic Divergence}, it synthesizes a repair command. For MPC primitives, repairs typically adjust geometric constraints (inflation radius). For RL primitives, the Brain actively synthesizes the \textbf{Reward Landscape} (Table \ref{tab:dynamic_api}).

\begin{table}[h]
\caption{\textbf{Dynamic Repair API ($\mathcal{A}_{exec}$).} These primitives instantiate the semantic policy gradient $\nabla_{sem}$, modifying the solver landscape to capture feasible solutions.}
\label{tab:dynamic_api}
\begin{center}
\begin{small}
\begin{tabular}{p{0.35\linewidth} p{0.6\linewidth}}
\toprule
\textbf{API Signature} & \textbf{Functionality \& Theoretical Mapping} \\
\midrule
\multicolumn{2}{l}{\textit{RL-Specific Optimizations (Policy Manifold Shaping)}} \\
\texttt{INJECT\_REWARD(func\_str)} & Overwrites the reward function with code synthesized by the Brain. This implements \textbf{Reward Landscape Synthesis}, allowing the VLM to reshape the optimization surface to guide exploration toward $\Phi_\tau$. \\
\texttt{SET\_RL\_HORIZON(steps)} & Adjusts the maximum episode length. This expands the temporal search space of $\Omega_\pi$, ensuring sufficient time for convergence. \\
\midrule
\multicolumn{2}{l}{\textit{Execution Dynamics \& Planner Tuning}} \\
\texttt{REPLAN\_PATH(inflation\_r)} & Modifies configuration space constraints for the MPC solver. Used to resolve collision-induced deadlocks. \\
\texttt{ADJUST\_GRASP(offset, vector)} & Modifies the end-effector approach pose. Used to correct high-torque grasp failures or singularities. \\
\texttt{TUNE\_IMPEDANCE(stiffness)} & Adjusts controller gains. Ensures stability during contact-rich interactions (e.g., opening heavy doors). \\
\texttt{SKIP\_SUBSTEP()} & Logically bypasses a primitive if the transition condition is already met (e.g., door opened by accidental contact). \\
\texttt{RESET\_ROBOT\_JOINT(cfg)} & Recovers from kinematic singularities, restoring the agent to a valid region of $\Omega_\pi$. \\
\bottomrule
\end{tabular}
\end{small}
\end{center}
\end{table}



\section{Embodied Brain: Architecture and Specialization}
\label{app:EmbodiedBrain}

In the FATE framework, the efficacy of the auditing mechanism relies fundamentally on the underlying model's ability to discern physical constraints from semantic hallucinations. We instantiate the Embodied Brain using a specialized version of RoboBrain 2.0 \citep{team2025robobrain}, a 7B parameter Vision-Language Model fine-tuned specifically for spatial reasoning and executable planning. This appendix outlines the model's architectural foundations, the composition of the training curriculum, and the quantitative validation of its embodied capabilities.

\subsection{Model Architecture}
The Auditor is built upon the RoboBrain 2.0 architecture, which derives from the Qwen2.5-VL foundation \citep{bai2023qwen}. The system comprises three primary components designed to bridge the gap between visual perception and logical reasoning. The perceptual front-end utilizes a Vision Transformer (ViT) encoder initialized from Qwen-VL's visual backbone, ensuring robust feature extraction from high-resolution inputs. These visual features are mapped into the linguistic embedding space via a multi-layer perceptron (MLP) projector, which facilitates precise modality alignment. The core reasoning engine is a Qwen2.5-7B Large Language Model, which processes the aligned visual tokens to generate semantic critiques and repair codes. To handle the computational demands of long-horizon robotic tasks, the training infrastructure employs deep learning optimization strategies including parameter sharding and attention acceleration, enabling the processing of video contexts up to 16k tokens.

\subsection{Training Data Formulation}
To equip the Embodied Brain with the necessary intuition for feasibility auditing, we leveraged a curated hybrid dataset of approximately 145,000 samples. The data distribution was explicitly engineered to address the action gap often observed in generalist VLMs, balancing three critical cognitive axes: Embodied Action, Spatial Perception, and Reasoning. The largest portion of the curriculum, comprising 45 percent of the dataset, focuses on Embodied Action. This segment aggregates data from benchmarks including RoboBench \citep{robobench}, EgoPlan-Bench2 \citep{egoplan}, and RoboSpatial-Home \citep{robospatial}, targeting capabilities in task decomposition, navigation, and affordance detection. An equal 45 percent share is dedicated to Spatial Perception, utilizing datasets like Blink \citep{blink}, OmniSpatial \citep{omnispatial}, and CVBench \citep{cvbench} to provide rigorous supervision for visual grounding and geometric understanding. The final 10 percent of the data preserves the model's high-level causal logic through the ERQA reasoning dataset \citep{erqa}. This strategic composition ensures that the Auditor does not merely label objects but understands their spatial relationships and interaction affordances within a physical scene.

\subsection{Empirical Validation}
The fine-tuned model was evaluated on the EmbodiedVerse benchmark \citep{he2025flagevalmm} to quantify its improvements over the baseline Qwen2.5-VL. The results demonstrate that the specialized training regimen significantly enhances the metrics most critical for the FATE auditing process. Specifically, the model achieved a 34.15 percent absolute improvement in Relative Shape estimation, representing an 80.01 percent relative gain over the baseline. This metric is vital for the Ante-Auditor's ability to detect morphological incompatibilities, such as an object being too large for a container. Furthermore, the model exhibited a 5.00 percent absolute improvement in Goal Decomposition. This enhancement validates the In-step Auditor's capacity to break down high-level instructions into executable primitives and identify logical breaks in the planning sequence. Crucially, these gains were achieved without catastrophic forgetting, as evidenced by a 2.17 percent improvement in general perception tasks like Counting. These performance metrics confirm that the Embodied Brain possesses the requisite spatial precision and planning foresight to serve as a reliable feasibility auditor within the FATE loop.


\end{document}